\title{TACIT: A Target-Agnostic Feature Disentanglement Framework for Cross-Domain Text Classification}
\author{
    Rui Song,\textsuperscript{\rm 1} 
    Fausto Giunchiglia,\textsuperscript{\rm 1, \rm 2, \rm 3}
    Yingji Li,\textsuperscript{\rm 3} 
    Mingjie Tian,\textsuperscript{\rm 1}
    Hao Xu\thanks{Corresponding author}\textsuperscript{\rm 1, \rm 3, \rm 4}
}
\begin{document}

\maketitle

\begin{abstract}
Cross-domain text classification aims to transfer models from label-rich source domains to label-poor target domains, giving it a wide range of practical applications. Many approaches promote cross-domain generalization by capturing domain-invariant features. However, these methods rely on unlabeled samples provided by the target domains, which renders the model ineffective when the target domain is agnostic. Furthermore, the models are easily disturbed by shortcut learning in the source domain, which also hinders the improvement of domain generalization ability. To solve the aforementioned issues, this paper proposes TACIT, a target domain agnostic feature disentanglement framework which adaptively decouples robust and unrobust features by Variational Auto-Encoders. Additionally, to encourage the separation of unrobust features from robust features, we design a feature distillation task that compels unrobust features to approximate the output of the teacher. The teacher model is trained with a few easy samples that are easy to carry potential unknown shortcuts. Experimental results verify that our framework achieves comparable results to state-of-the-art baselines while utilizing only source domain data.
\end{abstract}

\section{Introduction}
In recent years, natural language processing (NLP) models based on deep networks have made significant progress and have even surpassed human-level performance. But these methods often rely on manually labeled data, and the inconsistency between the distribution of labeled training domains and the unlabeled target domains poses a challenge for deploying these methods in practical applications~\cite{Ben-DavidRR20}. To address this challenge, Unsupervised Domain Adaptation (UDA) has emerged as a solution. UDA aims to generalize a model trained on labeled data from source domains to perform well on a target domain without labeled data. By employing UDA, models can overcome their dependency on labeled data from the target domain, which has attracted considerable attention from researchers. 

In UDA, cross-domain text classification is a basic but challenging task because of the differences in text expressions among the source and target domains. To enhance the performance of cross-domain text classification, numerous researches have focused on improving the generalization ability by extracting domain invariant features, including pivot-based methods~\cite{ZiserR18, ZhangHPJ18, Ben-DavidRR20}, task-specific knowledge-based methods~\cite{ZhouTWWXH20}, domain adversarial training methods~\cite{WuS22}, and class-aware methods~\cite{YeTHLNB20, LuoGL022}. Besides, there are approaches that use language models to perform self-supervised tasks to capture task-agnostic features in the target domain~\cite{KarouzosPP21}. These methods take full advantage of the commonality between the source and target domains to encourage model generalization.

However, these approaches still face \textbf{two main challenges}. \textbf{First}, capturing domain-invariant features tend to depend heavily on the target domain, which makes the model ineffective when the target domain is agnostic. Besides, the training of the generalized model requires consideration of additional target domain samples, which adds training and deployment costs. \textbf{Second}, the models are susceptible to shortcut learning\footnote{Shorcuts are defined as simple decision rules that can not be applied to more challenging scenarios, such as cross-domain generalization. Shortcut learning occurs when the model relies excessively on superficial correlations in the source domain, disregarding domain-specific features crucial for accurate classification in the target domain. Therefore, mitigating shorcuts can improve cross-domain generalization~\cite{MoonMLLS21}. In our study, shortcuts are not predefined, but included in easy samples. } in the source domain, which also hinders the improvement of domain generalization ability~\cite{GeirhosJMZBBW20, Yunlong2023}. 

To overcome the above challenges, we propose a \textbf{T}arget-\textbf{A}gnostic framework for \textbf{C}ross-domain text class\textbf{I}fica\textbf{T}ion (TACIT). It is inspired by the work of feature disentanglement~\cite{HuangZCWY21} as well as Variational Auto-Encoder (VAE) for text generation~\cite{BaoZHLMVDC19}. The aim of TACIT is to separate robust and unrobust features from the potential latent feature space of the source domain, and use robust features to promote cross-domain generalization performance. Moreover, we design a feature distillation task to encourage further separation of the unrobust features from the robust features. The teacher model in the distillation task learns from easy samples in the training set to ensure that it itself carries unrobust features. As a result, TACIT can use only source domain samples for cross-domain text classification without any target domain data and additional target domain training. Experiments on four publicly available datasets confirm that TACIT is capable of going beyond state-of-the-art approaches. Overall, our contributions are as follows:
\begin{itemize}
	\item We propose a feature disentanglement framework for separating robust and unrobust features and facilitating the model's ability to generalise across domains in target domain agnostic scenario.
	\item We train an unrobust teacher model with easy samples, and design a feature distillation task to encourage further decoupling of unrobust features. 
	\item We experimentally confirm that the proposed TACIT can be compared with some of the most advanced methods in the absence of target domain data.
\end{itemize}

\section{Related Work}\label{sec:work}
In this section, we list some of the work related to TACIT, including cross-domain text classification and entanglement methods in NLP.

\subsection{Cross-Domain Text Classification}
The high cost of acquiring large amounts of labeled data for each domain has prompted research into cross-domain text classification with the help of Unsupervised Domain Adaptation techniques~\cite{BlitzerDP07, YuJ16, RamponiP20}. Most of the previous works facilitate generalization by capturing pivots common to source and target domains~\cite{LiWZY18, ZiserR18, ZhangHPJ18, Ben-DavidRR20}, where pivots are key features or attributes that act as a bridge, enabling the model to transfer knowledge learned from labeled source data to the unlabeled target data. Another common approaches are domain adversarial training, which enhance the generalization ability of the model by allowing it to distinguish between source domain and target domain data~\cite{GaninUAGLLML16, QuZC0Z19, WuS22}. Task-specific knowledge-based methods introduce additional task-related knowledge to facilitate generalization. For example, SENTIX uses existing lexicons and annotations at both token and sentence levels to retrain the language model~\cite{ZhouTWWXH20}. \cite{LiWJZ22} helps cross-domain generalization by extracting sentiment-driven semantic graphs from Abstract Meaning Representation. Class-aware methods extracts better category-invariant features by learning more discriminative source domain labels~\cite{YeTHLNB20, LuoGL022}. Besides, there are approaches that use language models to perform self-supervised tasks to capture task-agnostic features in the target domain~\cite{DuSWQL20, KarouzosPP21}. They re-train the language model by performing cloze tasks in the target domain, so that the features of the target domain can be captured without any labels.

In contrast to the previous research methods, our approach adopts a more stringent criteria where the target domain is completely agnostic, and even unlabeled texts are not provided. This means there is no need to retrain the model specifically for the target domain when performing a new task in that domain. This flexibility allows for seamless application of our approach across diverse target domains without any target-specific training requirements.

\subsection{Textual Feature Disentanglement}
The disentanglement of latent space is first explored in the field of computer vision, and features of images (such as rotation and color) have been successfully disentangled~\cite{ChenCDHSSA16}. In NLP tasks, it is used to address the decoupling of latent representations of text, such as text style and content~\cite{JohnMBV19}, syntax and semantics~\cite{BaoZHLMVDC19}, opinions and plots in user reviews~\cite{PergolaGH21}, fairness representation and bias against sensitive attributes~\cite{ColomboSNP22}. They rely on Variational Auto-Encoders or some variations~\cite{KingmaW13}, to restore the original feature from the space of disentanglement. In addition, there are methods to facilitate the separation of specific feature spaces by imposing regularization constraints on different tasks~\cite{JohnMBV19, HuangZCWY21}. In this paper, inspired by the above disentangled methods, we promote the effect of cross-domain text classification by separating robust and unrobust features.

\section{Proposed Framework}
This section elaborates on the proposed framework TACIT. First, to facilitate the narration, we first give the problem formulation and some symbolic definitions. Subsequently, we gradually describe the composition of TACIT as shown in Figure~\ref{fig:model}. TACIT mainly contains a student model based on VAE and an easy teacher model with unrobust features. In the process of feature disentanglement of the student, the separated unrobust features are encouraged to learn from the teacher for better decoupling effect.

\begin{figure*}[t]
	\centering
	\includegraphics[width=1.5\columnwidth]{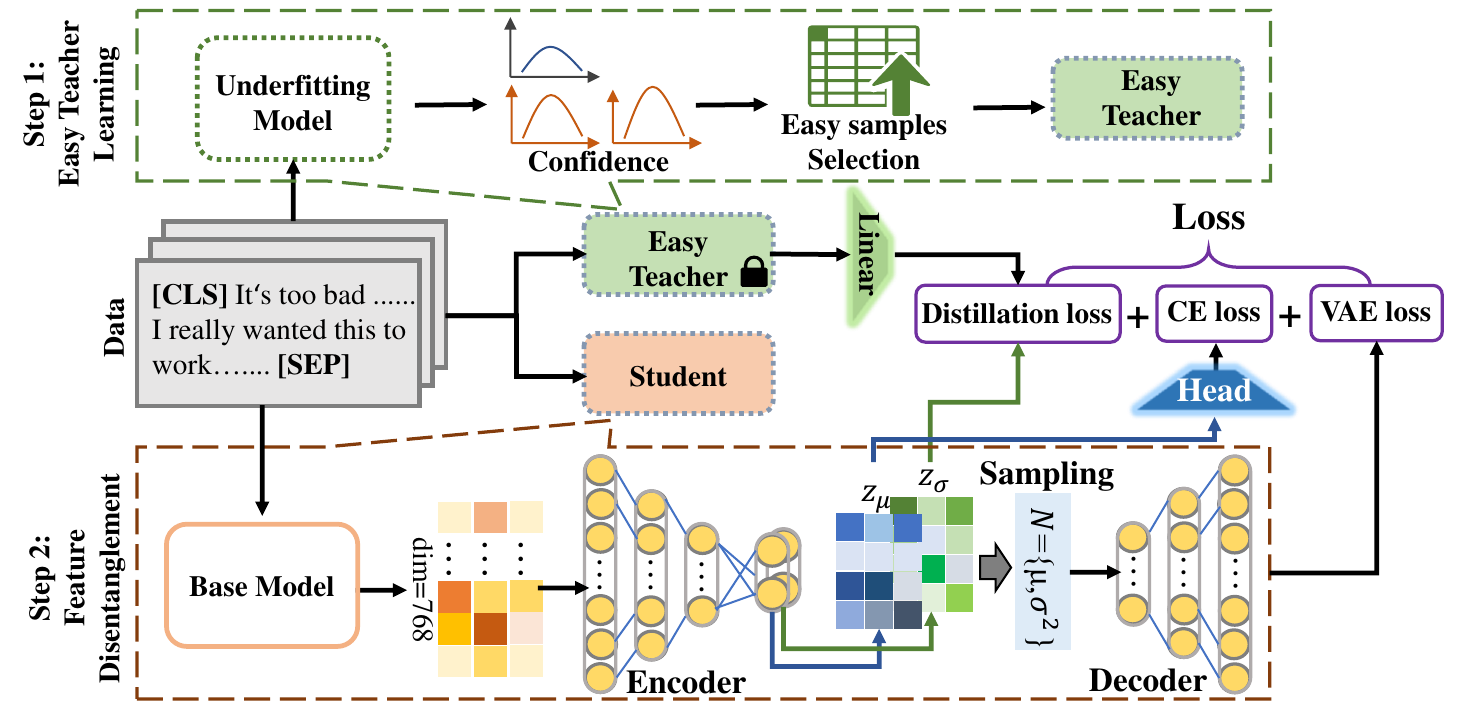} 
	\caption{TACIT's overall architecture and processing flow. It consists of two main steps and three tasks. In Step 1, an underfitting model selects a subset of easy samples from the source domain based on the confidence. Subsequently, such samples are used to train a teacher model. In Step 2, the output features of the base model are fed into VAE for disntanglement. The robust feature $z_\mu$ is used to predict the sample labels. Then, the unrobust feature $z_\sigma$ is scheduled to be learned from the teacher's output $\hat{z}$ through feature distillation. Finally, cross-entropy loss, VAE loss and distillation loss are used to co-optimize the model. \faLock ~indicates that model parameters are not updated during training.}
	\label{fig:model}
\end{figure*}

\subsection{Problem Formulation}
Similiar to~\cite{WuS22}, we consider two different scenarios: a source domain relative to a target domain and multiple source domains relative to a target domain. For any number of source domains $\mathcal{S}^l=\{x_i^l, y_i^l\}_{i=1}^{N^l_s}$ with labeled datas, our goal is to get a fully trained language model $\mathcal{M}$ with a classification head $\mathcal{F}(.)$, which has good generalization ability in the target domain $\mathcal{T}=\{x_i^t\}_{i=1}^{N_t}$ without any label. Here, $N^l_s$ and $N_t$ represents the number of samples from different source domains and the target domain, where $l\geq 1$ denotes the minimum number of source domain is 1. For any text $x_i$, it contains $m+2$ tokens $\{[CLS], t_1, ... t_m, [SEP]\}$ where $[CLS]$ is used to obtain the representation $h_i$ of the text output by $\mathcal{M}$. Then, $\mathcal{F}(h_i)$ maps the representation to the appropriate label $y_i$. In some methods, despite the absence of any labeling, data from the target domain $\mathcal{T}$ can still help train $\mathcal{M}$. In our approach, only the source domain $\mathcal{S}^l$ can be used.

\subsection{Student: Feature Disentanglement Based on VAE}
Under the premise that the target domain is agnostic, we expect that the model can disentangle robust and unrobust features in the continuous latent feature space, and the former is used for effective cross-domain generalization, while the latter is discarded as task-irrelevant features. Inspired by some related work on textual feature disentanglement~\cite{BaoZHLMVDC19, JohnMBV19}, we adopt VAE to separate robust and unrobust features from sample feature space~\cite{KingmaW13}. 

Specifically, we use a probabilistic latent variable $z$ to encode the representation $h$, and then decode $h$ from $z$:
\begin{equation}
	p(h) = \int p(z)p(h|z)~\mathrm{d}z,
\end{equation}
where $p(z)$ denotes the prior which is the standard normal $\mathcal{N}(0, \mathrm{I})$. To optimize VAE, the following loss according to the evidence lower bound(ELBO) is defined:
\begin{equation}
	\mathcal{L}_{vae} = -\mathbb{E}_{q(z|h)}[logp(h|z)] + KL(q(z|h)||p(z)),
\end{equation}
where $q(z|h)$ is the posterior given by the decoder, which is formed by $\mathcal{N}(\mu, \mathrm{diag}~\sigma^2)$. $KL$ is Kullback-Leibler divergence. Here, $\mu$ and $\sigma^2$ can be regarded as independent of each other under the premise of the standard normal~\cite{kawata1949characterisation,Fotopoulos07}, we present the relevant proof in the Appendix\ref{app:a}. Therefore, we use their corresponding representations to represent robust and unrobust features, instead of a simple feature split of $z$~\cite{JohnMBV19}. In practice, they can be modeled by two independent linear transformations and represented as $z_{\mu}$, $z_{\sigma}$. 

Next, to ensure the robustness of $z_{\mu}$, it should be able to help the model make correct predictions. Therefore, a classification head is used to predict the label of the current sample from $z_{\mu}$ by cross entropy (CE):
\begin{equation}
	\mathcal{L}_{ce} = CE(softmax(Head(z_{\mu}))),
\end{equation}
where $Head(.)$ is modeled using a linear transformation, which maps the input representations to the latent label space. By optimizing the above loss, it is possible to ensure the effectiveness of robust features for the classification task.

\subsection{Teacher: Easy Samples Learning}
Now, we have two premises, $z_{\mu}$ and $z_{\sigma}$ are disentangled and $z_{\mu}$ is used for robust label prediction. Several studies have shown that additional tasks targeting different features can help to further disentangle the features~\cite{JohnMBV19, HuangZCWY21}. Therefore, a natural idea is to add an extra task for $z_{\sigma}$ making it produce unrobust predictions. With difficulty, when the target domain is agnostic, producing unrobust predictions that are not conducive to cross-domain generalization is unavailable. Therefore, we train an easy teacher model for generating unrobust features and guiding $z_{\sigma}$ indirectly. The acquisition of teacher model is inspired by some unknown biases mitigation approaches~\cite{UtamaMG20, UtamaMG20b}, where a shallow model is easily affected by easy samples. We expect to extract easy samples from the training set and train the teacher model to learn the unrobust features contained in the easy samples. 

\textbf{Easy Samples Extraction}. Previous studies have proved that easy samples can be easily fitted by models with fewer parameters~\cite{LaiZFHZ21}. Besides, the model is also more likely to make overconfident predictions for the easy samples~\cite{DuMJDDGSH21}. Therefore, we obtain an underfitting shallow model to determine whether the sample is an easy sample. Specifically, we train a DistilBERT\footnote{https://huggingface.co/distilbert-base-uncased} or DistilRoBERTa\footnote{https://www.huggingface.co/distilroberta-base} for 2 epochs on all the training samples and rank the samples based on confidence. Confidence denotes the largest value in the predicted probability distribution. So if a sample can obtain a large confidence in the case of underfitting, it could be an easy samples. Top 35\%\footnote{In practice, the top 30\% samples are sometimes difficult to guarantee that the teacher model can give intelligent predictions, so we choose a slightly higher proportion.} of the samples are considered as easy samples. 

\textbf{Teacher training}. Subsequently, the easy samples are fed into a new distillation model for teacher learning. Unlike the underfitting models described above, we expect the teacher model to capture as much knowledge as possible from the easy samples, so the teacher model is trained until convergence. The training process for the teacher model is the same as for the student model, with details in Section~\ref{sec:details}.

\subsection{Distillation: Unrobust Features Distillation}
Different from most previous distillation methods for distilling logits, the unrobust features in TACIT do not perform the label prediction task. Therefore, our approach works on features as the distillation target. For each sample in the training set, the teacher model produces an unrobust feature $\tilde{h}$, even if the sample is not an easy sample. To align with $z_\sigma$, $\tilde{h}$ is fed to a simple linear transformation to get $\tilde{z} \in \mathbb{R}^{64}$. To make two different features comparable, we normalize them with a whitening operation, which is implemented by a non-parametric layer normalization operator without scaling and bias~\cite{Yixuan2022}. Then, a smooth $l_1$ loss is used as the loss function for feature distillation:
\begin{equation}
	\mathcal{L}_{distill} = \left\{
	\begin{array}{ll}
		\frac{1}{2}(\mathcal{\zeta}(z_\sigma)-\mathcal{\zeta}(\tilde{z}))^2/\beta, & |\mathcal{\zeta}(z_\sigma)-\mathcal{\zeta}(\tilde{z})|\leq \beta  \\
		|\mathcal{\zeta}(z_\sigma)-\mathcal{\zeta}(\tilde{z})| - \frac{1}{2}\beta, & otherwise, 
	\end{array} \right.
\end{equation}
where $\zeta(.)$ indicates the whitening operation, $\beta$ is a fixed parameter set to 2.0. By optimizing the above loss, the entangled feature $z_\sigma$ can be approached to the features of the unrobust teacher, thus further achieving separation from the robust features.

\subsection{Training and Inference}
Finally, the main body of training is the student model, so the overall loss function is the joint loss of three different loss functions:
\begin{equation}
	\mathcal{L} = (1-\lambda_1-\lambda_2)*\mathcal{L}_{ce} + \lambda_1*\mathcal{L}_{vae} + \lambda_2*\mathcal{L}_{distill},
\end{equation}
where $\lambda_1$ and $\lambda_2$ are the weighted coefficient. Throughout the training process, all parameters of the teacher are frozen as it only provides prior knowledge of unrobust features. 

In the process of inference, the encoder part of the student model is used to predict the label of a new sample by the robust feature $z_\mu$, regardless of whether the new sample comes from the source or target domain. After the above process, TACIT does not require any unlabeled data from the target domain for domain adversarial training, but only uses the source domain data to obtain robust model. 

\section{Experiments}\label{sec:exp}
In this section, we present the datasets required for the experiments, the baselines for comparisons, the results in the single-source and multi-source domains, and the corresponding experimental results with the framework analysis. 

\subsection{Datasets}\label{sec:data}

We evaluate the proposed TACIT on the most widely used Amazon reviews dataset~\cite{BlitzerDP07}, which contains binary sentiment classification tasks from four different domains: Books (B), DVDs (D), Electronics (E), and Kitchen (K). Each domain contains 1000 positive samples and 1000 negative samples. For each domain, we use a five-fold cross-validation protocol, where 20\% of the samples are randomly selected as the development set, and the optimal model on the development set is saved for the target domain generalization test. Publicly available data divisions are used to make fair comparisons~\cite{Ben-DavidRR20}. Then, compliance with the previous work~\cite{WuS22}, we give different configurations for the single-source and multiple-source cases. For single-source domains, we train on one dataset and test on the other three. Thus a total of 4*3 = 12 tasks are constructed~\cite{ZiserR18}. For multi-source domains, we train the model on any three datasets and test it on the remaining one. Thus a total of 4*1=4 tasks are constructed. In addition to the widely used Amazon reviews dataset, we have also compared the proposed approach on a variety of tasks and models. See Appendix~\ref{app:b} for the details and results.

\subsection{Baselines}\label{sec:baseline}
We compare TACIT with the following state-of-the-art approaches to validate the competitiveness of the proposed method:
\begin{itemize}
	\item \textbf{DAAT}~\cite{DuSWQL20}. It encourages BERT to capture domain-invariant features through domain-adversarial training, thus improving generalization capabilities. 
	\item \textbf{R-PERL}~\cite{Ben-DavidRR20}. It extends BERT with a pivot-based variant of the Masked Language Modeling (MLM) objective. 
	\item \textbf{CFd}~\cite{YeTHLNB20}. It introduces class-aware feature self-distillation to self-distill PLM's features into a	feature adaptation module, which makes the features from the same class are more tightly clustered. 
	\item \textbf{UDALM}~\cite{KarouzosPP21}. It continues the pretraining of BERT on unlabeled target domain data using the MLM task, then trains a task classifier with source domain labeled data. 
	\item \textbf{COBE}~\cite{LuoGL022}. It improves the contrastive learning loss of negative samples within one batch, so that the representations of different classes become further away in the potential space. It is more generalizable on similar tasks by giving more reasonable determinations on categories. 
	\item \textbf{AdSPT}~\cite{WuS22}. It trains vanilla language model with soft prompt tuning and an adversarial training object, thus alleviating the domain discrepancy
	of MLM task.
	\item \textbf{Vanilla}. For comparison, we also fine-tune the basic language models BERT~\cite{DevlinCLT19} and RoBERTa~\cite{RoBERTa} with the cross-entropy loss function. 
\end{itemize}
To evaluate the baselines, we use accuracy as the evaluation metric following~\cite{KarouzosPP21, WuS22}. With the exception of AdSPT and CFd, which use RoBERTa and XLM-R, the other approaches use BERT as the backbone language model. We report the optimal results given in the original papers to prevent duplicated code from failing to achieve the results reported in the paper. In addition, we also replicate UDALM using RoBERTa as the backbone based on the official code for a fair comparison, as it is the optimal model on BERT. 

\subsection{Experimental Details}\label{sec:details}
We initialize our model with BERT$_{base}$ and RoBERTa$_{base}$ as the backbone. Accordingly, to ensure that student models can be aligned to the teachers, the teacher models are the corresponding distillation versions, DistilBERT~\cite{DistilBERT} and DistilRoBERTa. All models are trained 10 epochs with batch size 64. The learning rate is set to 1e-5, and the optimizer is AdamW~\cite{LoshchilovH19}. The weight of the loss function is set to $\lambda_1=0.001$ and $\lambda_2=0.1$ (See Section~\ref{sec:params} for a detailed discussion). For Encoder and Decoder, we use two symmetric three-layer MLPs where the activation function is ReLU and the hidden layer sizes are 356, 128, and 64, respectively. All experiments are conducted with Pytorch and HuggingFace Transformers on four NVIDIA GeForce RTX 2080 Ti GPUs. Our code is available online\footnote{https://github.com/songruiecho/TACIT}.

\subsection{Results}\label{sec:results}

\begin{table*}[htbp]
	\centering
	\setlength\tabcolsep{1.6mm}
	\renewcommand{\arraystretch}{0.5}
	\begin{tabular}{cccccccc|cccc}
		\hline \hline 
		\multirow{2}[0]{*}{Source→Target} & \multicolumn{7}{c}{BERT} & \multicolumn{4}{c}{RoBERTa} \\ \cline{2-12}
		& Vanilla & DAAT & R-PERL & CFd   & COBE  & UDALM & TACIT & Vanilla & UDALM & AdSPT & TACIT \\ \hline 
		B→D & 88.96 & 89.70 & 87.80  & 87.65 & 90.05 & 90.97 & 91.42 & 91.45 & 92.18 & 92.00 & \textbf{92.65} \\
		B→E & 86.15 & 89.57 & 87.20  & 91.30 & 90.45 & 91.69 & 91.68 & 93.19 & 93.55 & \textbf{93.75} & 93.81 \\
		B→K & 89.05 & 90.75 & 90.20  & 92.45 & 92.90 & 93.21 & 92.73 & 93.35 & \textbf{95.32} & 93.10 & 95.03 \\
		D→B & 89.40 & 90.86 & 85.60  & 91.50 & 90.98 & 91.00 & 91.33 & 91.51 & 93.34 & 92.15 & \textbf{93.57} \\
		D→E & 86.55 & 89.30 & 89.30  & 91.55 & 90.67 & 92.30 & 91.83 & 90.42 & 93.60 & \textbf{94.00} & 93.16 \\
		D→K & 87.53 & 87.53 & 90.40  & 92.45 & 92.00 & 93.66 & 91.55 & 92.85 & 93.21 & 93.25 & \textbf{94.40} \\
		E→B & 86.50 & 88.91 & 90.20  & 88.65 & 87.90 & 90.61 & 89.62 & 91.40 & 91.80 & \textbf{92.70} & \textbf{92.70} \\
		E→D & 87.59 & 90.13 & 84.80  & 88.20 & 87.87 & 88.83 & 89.25 & 89.28 & \textbf{93.38} & 93.15 & 92.06 \\
		E→K & 91.60 & 93.18 & 91.20  & 93.60 & 93.33 & 94.43 & 94.18 & 94.95 & 94.85 & 94.75 & \textbf{95.87} \\
		K→B & 87.55 & 87.98 & 83.00  & 89.75 & 88.38 & 90.29 & 89.70 & 91.00 & 92.74 & 92.35 & \textbf{93.06} \\
		K→D & 87.95 & 88.81 & 85.60  & 87.80 & 87.43 & 89.54 & 89.20 & 89.83 & 92.33 & \textbf{92.55} & 91.97 \\
		K→E & 90.45 & 91.72 & 91.20  & 92.60 & 92.58 & 94.34 & 93.40 & 92.80 & 93.56 & 93.95 & \textbf{94.57} \\ \hline 
		Avg & 88.25 & 90.12 & 87.50  & 90.63 & 90.39 & 91.74 & 91.32 & 91.84 & 93.32 & 93.14 & \textbf{93.57} \\ \hline \hline 
	\end{tabular}%
	\caption{Single source cross-domain generalization performance for TACIT and baselines. The boldface indicates the optimal results. For each model we report the average results across the five folds. `Vanilla' denotes fine-tuning on the source domain labeled data. `Source' denotes training on the source and `Target' means testing on the target dataset. `Avg' represents the average of all cross-domain generalization tasks.}
	\label{tab:results}%
\end{table*}%

\begin{table}[htbp]
	\centering
	\setlength\tabcolsep{3.3mm}
	\renewcommand{\arraystretch}{0.5}
	\begin{tabular}{cccc}
		\hline  \hline
		Source→Target & Vanilla & AdSPT & TACIT \\ \hline
		DEK→B & 92.70 & 93.50  & \textbf{93.64} \\
		BEK→D & 91.63 & 93.50  & \textbf{95.06} \\
		BDK→E & 94.00 & 94.25 & \textbf{95.04} \\
		BDE→K & 94.15 & 93.75 & \textbf{94.86} \\  \hline
		Avg   & 93.12 & 93.75 & \textbf{94.65} \\ \hline \hline
	\end{tabular}%
	\caption{Cross-domain generalization results of multiple training sources. The boldface indicates the optimal results. \textbf{AdSPT} is the only method reporting multiple sources in the baselines, so we use RoBERTa$_{base}$ as the backbone for comparison.}
	\label{tab:multi-source}%
\end{table}%

We report the experimental results of BERT and RoBERTa as the backbones in Table~\ref{tab:results}. We find that the proposed TACIT is close to the state-of-the-art approaches in both single-source (Table~\ref{tab:results}) and multi-sources configurations (Table~\ref{tab:multi-source}), even without using any target domain samples. We also find that different data sources have different impacts on the target domain (Figure~\ref{fig:multi}). The specific descriptions and discussions are as follows.

\textbf{Results on single source}. When using BERT as the backbone, UADLM achieves the best results (91.74\%) because it performs BERT's MLM training on the target domain, which improves the ability to model the target domain context. But the average result of TACIT is only 0.42\% less than that of UDALM without additional tasks for the target domain. This proves that the proposed method can influence the performance of the target domain through reasonable modeling of the source domain. Besides, TACIT works better than CFd (0.69\%) and COBE (0.93\%), which shows that the disentanglement of robust and unrobust features is more efficient than reasonable in-class modeling.

While using RoBERTa as the backbone, we can observe that TACIT achieves the best average result, which is 0.25\% higher than UDALM. This is because RoBERTa is larger and contains more task-related knowledge than BERT. Our feature disentanglement method can assist RoBERTa to make more informed feature choices without the stimulation of the target domain. A more straightforward example can be found in Vanilla, where just fine-tuned RoBERTa yielded better result (91.84\%) than UDALM$_{bert}$, suggesting that RoBERTa is better suited to the task of cross-domain semantic classification. Therefore, in the subsequent parameter exploration and ablation experiment, we used RoBERTa as the backbone. 

\begin{figure}[t]
	\centering
	\includegraphics[width=0.9\columnwidth]{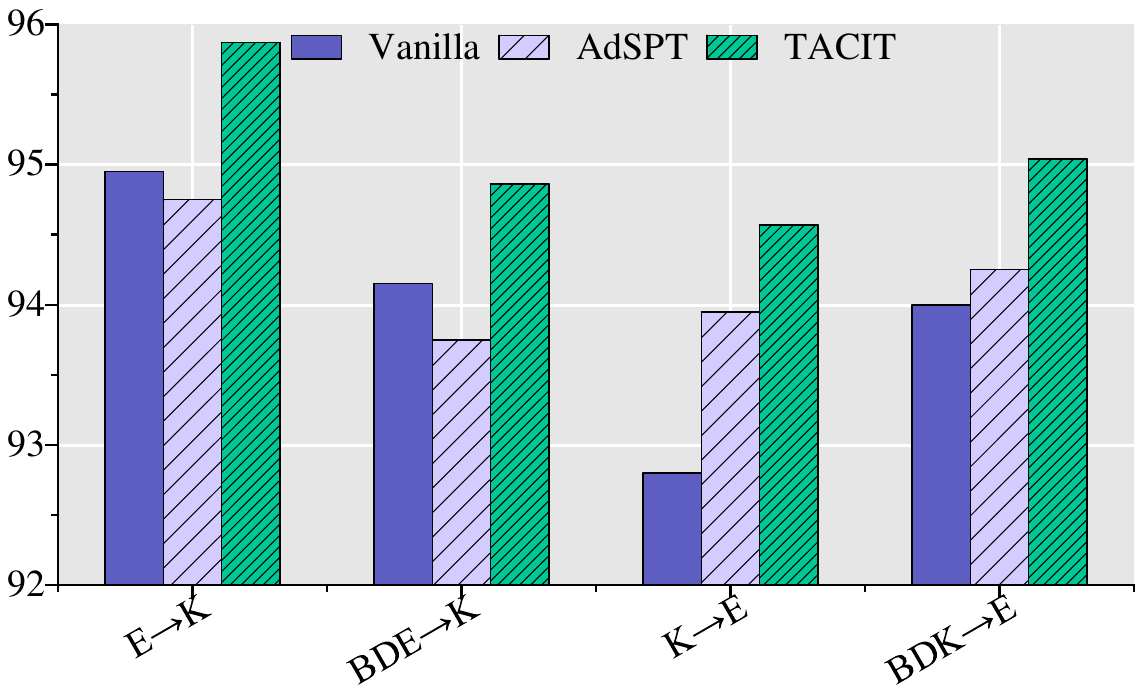} 
	\caption{Comparison of single-source and multi-source experimental results on similar data sets K and E.}
	\label{fig:multi}
\end{figure}

\textbf{Results on multi-sources}. We observe that increasing the source domain generally improves the performance of the target domain on average, due to the commonality among similar tasks. For Vanilla and TACIT, multiple source configurations create 1.28\% and 1.08\% boosts. But for AdSPT, the improvement is only 0.61\%. This suggests that AdSPT is not sensitive to changes in the source domain, which may be due to the fact that it partially relies on data from the target domain, whereas Vanilla and TACIT fully rely on the source domain. It indicates that similar multi-source configurations can better stimulate TACIT's performance improvement. But multi-source configuration is not valid in all cases, as explained in the following paragraph.

\begin{figure}[t]
	\centering
	\includegraphics[width=0.99\columnwidth]{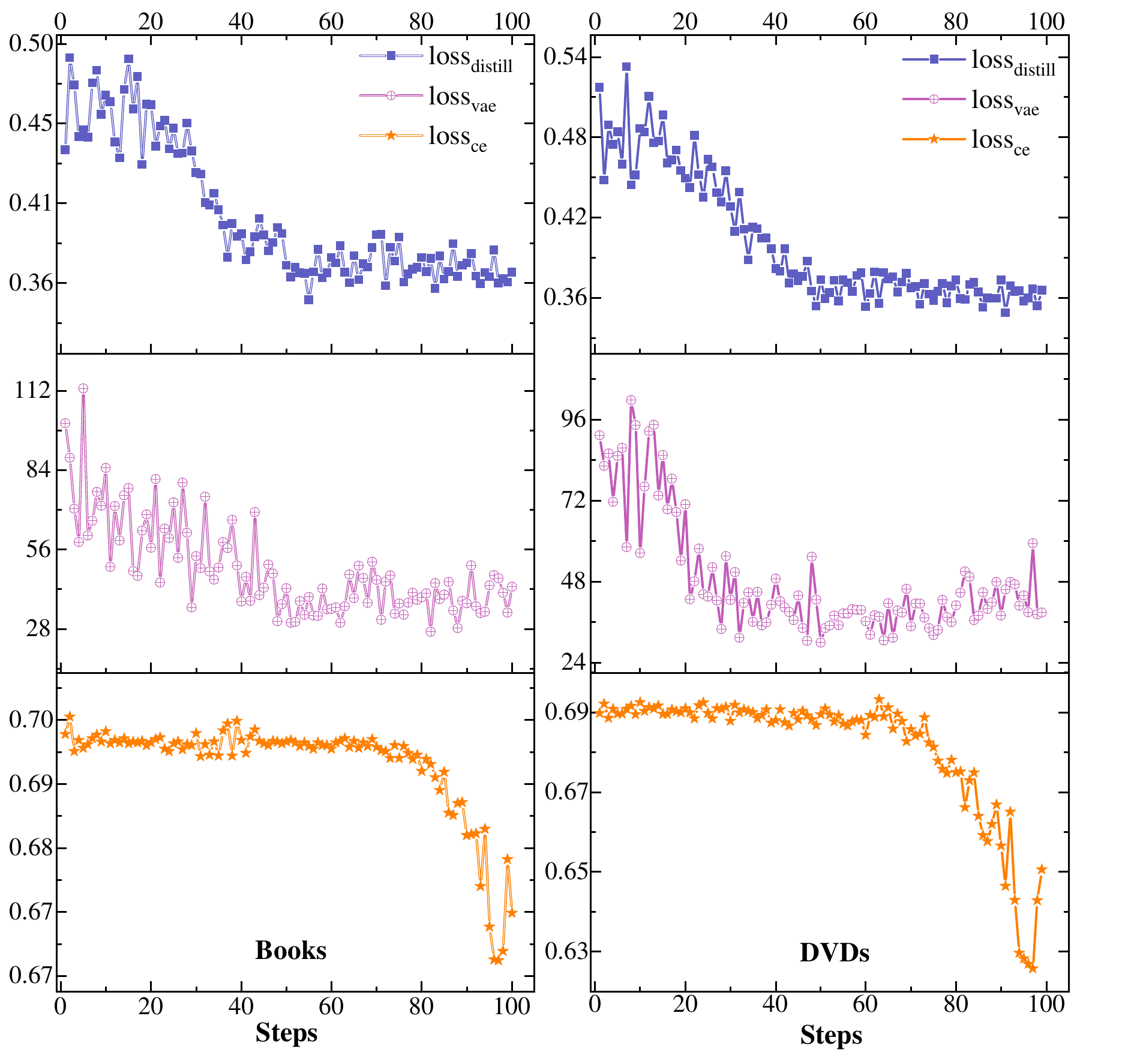} 
	\caption{The changes of loss on fold-1 with Books and DVDs as source domains during the model training process. Different styles of lines represent different datasets as well as loss values. }
	\label{fig:loss}
\end{figure}

\begin{figure*}[t]
	\centering
	\includegraphics[width=1.7\columnwidth]{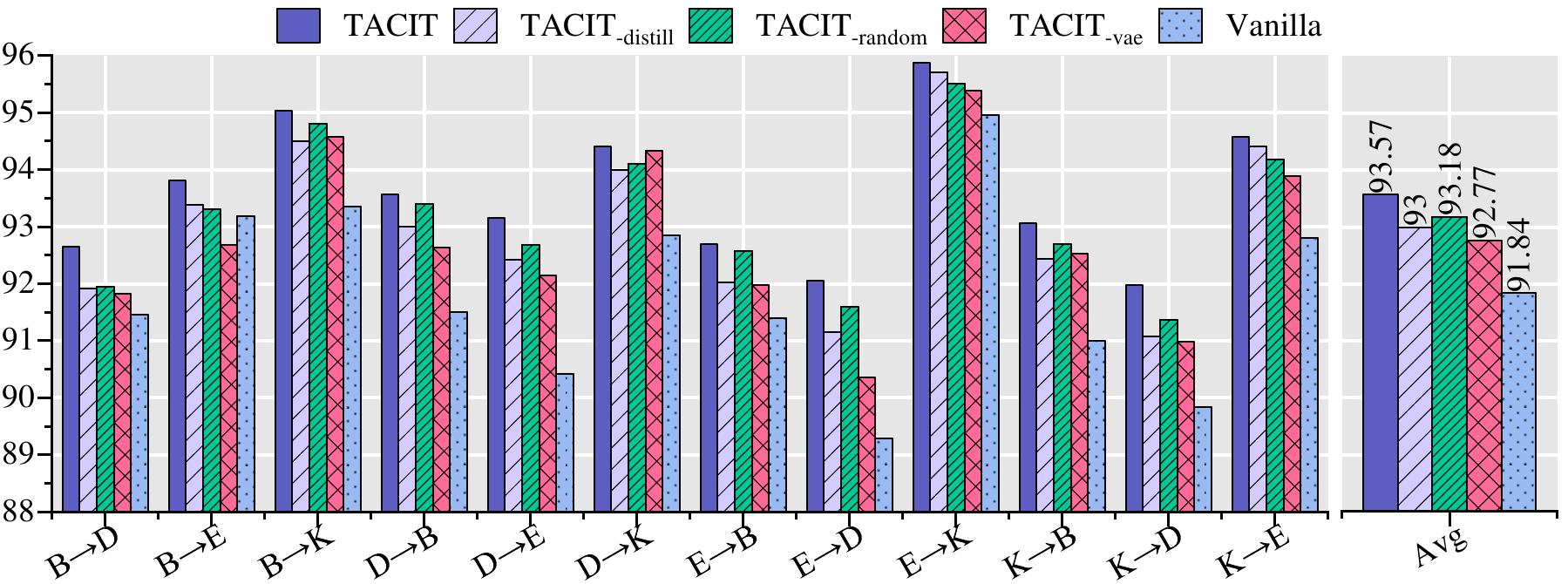} 
	\caption{Comparison of ablation results of different cross-domain generalization tasks, where different colors and styles of bars indicate different TACIT variants. }
	\label{fig:abl}
\end{figure*}

\textbf{Single-source v.s. Multi-source}. Subsequently, with the same target domain, we further compare the results for single and multiple sources in Figure~\ref{fig:multi}. We select datasets K and E with similar feature distributions. As reported in previous work, generalization between similarly distributed datasets tends to have better experimental results~\cite{WuS22}, but with the introduction of less similar datasets, a decrease in generalizability may be observed, e.g., the results of BDE$\to$K are inferior to those of E$\to$K. On the contrary, this phenomenon disappears when E is used as the target domain, which suggests that different datasets perform differently when used as source and target domains, indicating the importance of dataset selection in cross-domain text classification tasks.

\subsection{Parameter Selection}\label{sec:params}

In the cross-domain generalization task, the grid search of parameters is difficult because of the need to consider multiple target domains. Therefore, we compare the loss values for different tasks to determine a rough parameter magnitude, rather than manually adjusting for different datasets. As shown in Figure~\ref{fig:loss}, the cross entropy loss of main task and distillation loss are about the same order, while the VAE loss is much larger. So we set $\lambda_1=0.001$ and $\lambda_2=0.1$ to ensure that the auxiliary tasks do not unduly affect the optimization of the main task. Although this may lead to non-optimal results, parameter tuning in the face of a new task is economized and is more conducive to task migration easily.

\subsection{Ablation Study}\label{sec:abl}

To further verify the effectiveness of the proposed method, the following three variants of TACIT are tested:
\begin{itemize}
	\item TACIT$_{-distill}$. It means that the feature distillation module is not used, and only VAE is used for disentanglement.
	\item TACIT$_{-random}$. It means randomly selecting 35\% of the samples as the training data for the teacher model, rather than selecting the samples with high confidence. 
	\item TACIT$_{-vae}$. It means that VAE is not used, but the output of Encoder is fed directly to two different linear transformations, one whose output is used to predict labels and the other whose output is used for feature distillation. 
\end{itemize}
The results of the ablation studies are shown in Figure~\ref{fig:abl}. All three variants cause TACIT performance degradation, both in individual tasks and overall averages. But there are differences between them. Firstly, we observe that TACIT$_{-vae}$ causes the most performance degradation in all but a few cases (K$\to$B). This shows that the biggest factor affecting TACIT is sufficient decoupling of features. If VAE is removed, then the independence between features is abandoned. As a result, the robust features can not be separated. Secondly, we also observe a decline in TACIT$_{-distill}$'s performance, as it is further disentangling features through different tasks. Because it does not destroy the overall architecture of feature disentanglement, the impact is small. Thirdly, TACIT$_{-random}$ also reduces the generalization effect of the model, which shows that easy sample selection based on confidence is more advantageous than random easy sample selection. In addition, TACIT$_{-random}$ also brings minimal performance degradation, indicating that even with random sample selection, feature distillation still brings some positive effects compared to TACIT$_{-distill}$. Finally, with the exception of B$\to$E, all variants of TACIT achieve better results than Vanilla. Therefore, to sum up, the core components of TACIT all make positive contributions to the improvement of generalization.

\subsection{Visualisation}\label{sec:vis}

\begin{figure}[t]
	\centering
	\begin{subfigure}{0.46\linewidth}
		\includegraphics[width=\linewidth]{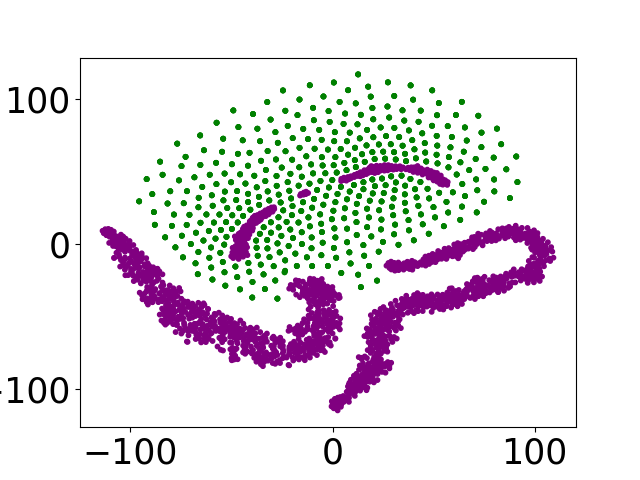}
		\caption{TACIT}
		\label{fig:sub1}
	\end{subfigure}
	\begin{subfigure}{0.46\linewidth}
		\includegraphics[width=\linewidth]{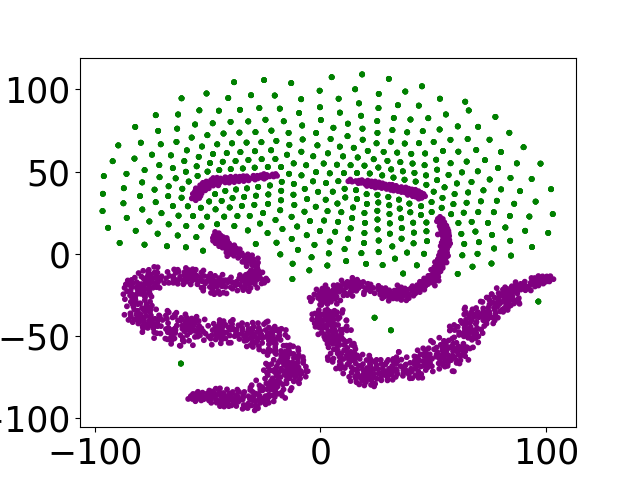}
		\caption{TACIT$_{-distill}$}
		\label{fig:sub2}
	\end{subfigure}
	\\
	\begin{subfigure}{0.46\linewidth}
		\includegraphics[width=\linewidth]{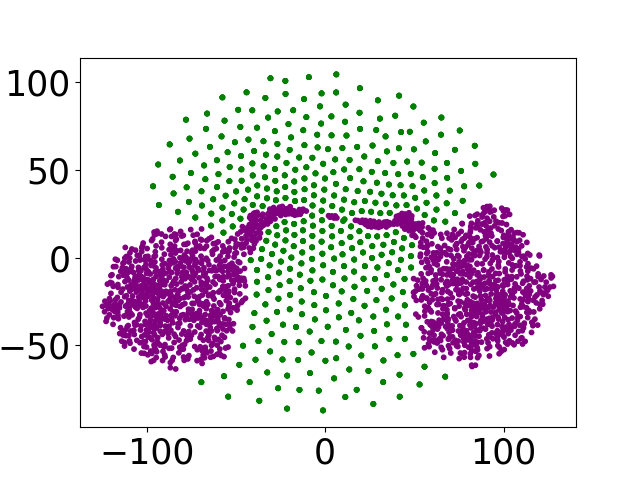}
		\caption{TACIT$_{-vae}$}
		\label{fig:sub3}
	\end{subfigure}
	\caption{Feature visualisation results of $z_\mu$ and $z_\sigma$ for TACIT and the two corresponding variants TACIT$_{-distill}$ and TACIT$_{-vae}$ on B$\to$D, where the green nodes indicate $z_\sigma$ and the purple nodes indicate $z_\mu$. }
	\label{fig:vis}
\end{figure}

Further, through the visualisation of the representations, we determine the impact of feature disentanglement on cross-domain generalisation. Specifically, tSNE is used to project 64-dimensional features into a two-dimensional space~\cite{vandermaaten08a}. In Figure~\ref{fig:vis}, we show the visualisation results of B$\to$D. The representation $z_\mu$ is used for classification so that smooth clusters can be obtained with model optimization, which can be observed in three subgraphs. But in Figure~\ref{fig:sub3}, the purple clusters are not as smooth as Figure~\ref{fig:sub1} and Figure~\ref{fig:sub2}, suggesting that VAE can enhance the results of label classification. Besides, the three subgraphs demonstrate differences in $z_\sigma$. Specifically, in Figure~\ref{fig:sub1}, green cluster is more compact and more clearly distinguishable from the purple clusters, suggesting a good separation of the two features. For TACIT$_{-distill}$ in Figure~\ref{fig:sub2}, the green cluster is much looser and some nodes demonstrate a tendency to stray, which suggests that deleting feature distillation has had some negative effects. The most significant impact on the results is the deletion of VAE as shown in Figure~\ref{fig:sub3}, which directly results in green nodes spanning the entire space. The above observations correspond to the results in Section~\ref{sec:abl}, which further illustrates the effectiveness of the proposed method for feature disentanglement.

\section{Conclusion}
In this paper, facing the challenge of target domain agnostic in cross-domain text classification, we propose a feature disentanglement framework TACIT based only on source domain. TACIT is built on the premise that robust features contribute to classification, while unrobust features are irrelevant. The disentanglement of robust and unrobust features is achieved by variational autoencoders, and this feature separation is exacerbated by additional feature distillation tasks. The experiment of common cross-domain text classification datasets proves that the proposed method can achieve comparable results as the optimal method without using any target domain data. 

In the future work, we will explore more judicious methods of easy sample selection to train a more unrobust teacher model. In addition, other language models will be further explored to rate the generalizability of the proposed method. 

\clearpage
\section{Acknowledgments}
This work was supported by National Natural Science Foundation of China (NSFC), ``From Learning Outcome to Proactive Learning: Towards a Humancentered AI Based Approach to Intervention on Learning Motivation” (No. 62077027), and the major project of the National Natural Science Foundation of China (NSFC) ``Research on Major Theoretical and Practice Issues in Innovation-Driven Entrepreneurship" (Grant No. 72091310), Project 1 ``Developing Theory on Innovation-Driven Entrepreneurship in the Digital Economy" (Grant No. 72091315). The work was also supported by the Education Department of Jilin Province, China (JJKH20200993K) and the Department of Science and Technology of Jilin Province, China (20200801002GH).

\bibliography{aaai24}

\clearpage

\appendix
\section{Proof} \label{app:a}

\newtheorem{theorem}{Theorem}
\begin{theorem}
	Suppose a set of independent samples $\{X_1, X_2, ... ,X_n\}$ follow the normal distribution $\mathcal{N}(\mu, \sigma^2)$, the mean and variance of the samples are independent of each other. 
\end{theorem}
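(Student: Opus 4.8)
The plan is to interpret the statement in its standard form---that for i.i.d.\ $\mathcal{N}(\mu,\sigma^2)$ observations the sample mean $\bar X = \frac1n\sum_{i=1}^n X_i$ and the sample variance $S^2 = \frac1{n-1}\sum_{i=1}^n (X_i-\bar X)^2$ are independent random variables---and to prove it by an orthogonal change of coordinates. The first reduction is to note that $S^2$ depends on $X=(X_1,\dots,X_n)$ only through the deviation vector $D=(X_1-\bar X,\dots,X_n-\bar X)$, so it is enough to show $\bar X$ is independent of $D$; in fact it will suffice to produce one linear statistic proportional to $\bar X$ that is independent of a family of linear statistics through which $S^2$ factors.

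The main device is a Helmert-type orthogonal matrix. First I would construct an $n\times n$ orthogonal matrix $A$ whose first row is $\frac1{\sqrt n}(1,1,\dots,1)$; such a matrix exists by completing the unit vector $\frac1{\sqrt n}\mathbf 1$ to an orthonormal basis of $\mathbb R^n$ via Gram--Schmidt. Setting $Y=AX$ gives $Y_1=\sqrt n\,\bar X$, and since $X\sim\mathcal{N}(\mu\mathbf 1,\ \sigma^2 I_n)$ we obtain $Y\sim\mathcal{N}(A\mu\mathbf 1,\ \sigma^2 AA^\top)=\mathcal{N}(A\mu\mathbf 1,\ \sigma^2 I_n)$ because $AA^\top=I_n$. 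Hence $Y_1,\dots,Y_n$ are jointly Gaussian with diagonal covariance and therefore mutually independent. Because $A$ is orthogonal it preserves Euclidean norm, so $\sum_{i=1}^n Y_i^2=\sum_{i=1}^n X_i^2$; subtracting $Y_1^2=n\bar X^2$ yields $\sum_{i=2}^n Y_i^2=\sum_{i=1}^n X_i^2-n\bar X^2=\sum_{i=1}^n (X_i-\bar X)^2=(n-1)S^2$. Thus $\bar X=Y_1/\sqrt n$ is a function of $Y_1$ alone while $S^2=\frac1{n-1}\sum_{i=2}^n Y_i^2$ is a function of $(Y_2,\dots,Y_n)$ alone, and the independence of $Y_1$ from $(Y_2,\dots,Y_n)$ delivers the conclusion.

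The step I expect to need the most care is the claim that the components of $Y$ are independent: it rests on the two facts that a linear image of a multivariate Gaussian is again Gaussian with transformed mean and covariance, and that jointly Gaussian coordinates with zero pairwise correlation are independent---the orthogonality $AA^\top=I_n$ being precisely what diagonalizes the transformed covariance. If a more self-contained argument is preferred, I would instead compute the joint characteristic function $\mathbb E\!\left[\exp\!\big(i\,t\,\bar X + i\,\langle s,D\rangle\big)\right]$ directly and show it factors into a function of $t$ times a function of $s$, but the orthogonal-transformation route is shorter. For completeness I would also note, since the body of the paper cites the converse, that this independence in fact characterizes the normal law among laws with finite variance, though that direction is not needed here.
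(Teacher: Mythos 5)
Your proposal is correct and follows essentially the same route as the paper: a Helmert-type orthogonal transformation $Y=AX$ with first row $\frac{1}{\sqrt n}(1,\dots,1)$, giving $Y_1=\sqrt n\,\bar X$ and $(n-1)S^2=\sum_{i=2}^n Y_i^2$, with independence of the $Y_i$ delivering the conclusion. The only difference is presentational: you invoke the general fact that an orthogonal image of an i.i.d.\ Gaussian vector has independent components, whereas the paper verifies this by explicitly factorizing the joint density after the change of variables.
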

\begin{proof}
	For the above samples, the mean and variance can be expressed as $\bar{X}=\frac{1}{n}\sum_{i=1}^{n}X_i$ and $S^2=\frac{1}{n-1}\sum_{i=1}^{n}(X_i-\bar{X})^2$. To prove that $\bar{X}$ and $S^2$ is independent of each other, an orthogonal matrix $A$ is constructed as follows:
	\begin{equation} \small  \setlength{\arraycolsep}{0.5pt} 
		A = \left[ {\begin{array}{cccccc} 
				\frac{1}{\sqrt{n}} & \frac{1}{\sqrt{n}} & \frac{1}{\sqrt{n}} & ... & \frac{1}{\sqrt{n}} & \frac{1}{\sqrt{n}} \\
				\frac{1}{\sqrt{2*1}} & \frac{-1}{\sqrt{2*1}} & 0 & ... & 0 & 0 \\
				\frac{1}{\sqrt{3*2}} & \frac{1}{\sqrt{3*2}} & \frac{-2}{\sqrt{3*2}} & ... & 0 & 0 \\
				\vdots & \vdots & \vdots & & \vdots & \vdots \\
				\frac{1}{\sqrt{n(n-1)}} & \frac{1}{\sqrt{n(n-1)}} & \frac{1}{\sqrt{n(n-1)}} & ... & \frac{-1}{\sqrt{n(n-1)}} & \frac{-(n-1)}{\sqrt{n(n-1)}} \\
		\end{array} } \right]
	\end{equation}
	Through the orthogonal matrix $A$, $X$ can be transformed into $Y$ by the orthogonal transformation $Y=AX$, where $Y=[Y_1, Y_2, ..., Y_n]^\textsuperscript{T}$. Since $Y$ can be represented by $X$, the probability density function for both can be written as: 
	\begin{equation}
		\begin{array}{cl}
			\mathcal{P}(Y)&=\mathcal{P}(X)=\mathcal{P}(X_1)\mathcal{P}(X_2)...\mathcal{P}(X_n) \\
			&=\prod_{i=1}^{n}\frac{1}{\sqrt{2\pi \sigma}}e^{-\frac{(X_i-\mu)^2}{2\sigma^2}} \\
			&=(2\pi\sigma^2)^{-\frac{n}{2}}e^{-\frac{1}{2\sigma^2}\sum_{i=1}^{n}(X_i-\mu)^2} \\
			&=(2\pi\sigma^2)^{-\frac{n}{2}}e^{-\frac{1}{2\sigma^2}\sum_{i=1}^{n}(X_i^2-2X_i\mu+\mu^2)} \\
			&=(2\pi\sigma^2)^{-\frac{n}{2}}e^{-\frac{1}{2\sigma^2}(\sum_{i=1}^{n}X_i^2-2n\bar{X}\mu+n\mu^2)} \\
		\end{array} \label{eq:p}
	\end{equation}
	For $Y$, we have $Y^\textsuperscript{T}Y=(AX)\textsuperscript{T}(AX)=X\textsuperscript{T}A\textsuperscript{T}AX=X\textsuperscript{T}X$, and $Y^\textsuperscript{T}Y$ can be calculated by $[Y_1, Y_2, ..., Y_n]*[Y_1, Y_2, ..., Y_n]^\textsuperscript{T} = \sum_{i=1}^{n}Y_i^2$, so $\sum_{i=1}^{n}Y_i^2=\sum_{i=1}^{n}X_i^2$. Besides, $Y_1=\frac{1}{\sqrt{n}}(X_1, X_2,...X_n)=\sqrt{n}\bar{X}$, so $\bar{X}=\frac{1}{\sqrt{n}}Y_1$. Replace $X$ in Eq~\ref{eq:p} with $Y$, we get:
	\begin{equation}
		\begin{array}{cl}
			\mathcal{P}(Y)&=(2\pi\sigma^2)^{-\frac{n}{2}}e^{-\frac{1}{2\sigma^2}(\sum_{i=1}^{n}Y_i^2-2\sqrt{n}Y_1\mu+n\mu^2)}\\
			&=(2\pi\sigma^2)^{-\frac{n}{2}}e^{-\frac{1}{2\sigma^2}(\sum_{i=2}^{n}Y_i^2+Y_1^2-2\sqrt{n}Y_1\mu+n\mu^2)} \\
			&=(2\pi\sigma^2)^{-\frac{n}{2}}e^{-\frac{1}{2\sigma^2}(\sum_{i=2}^{n}Y_i^2+(Y_1-\sqrt{n}\mu)^2)} \\
			&=\frac{1}{\sqrt{2\pi \sigma}}e^{-\frac{(Y_1-\sqrt{n}\mu)^2}{2\sigma^2}} \frac{1}{\sqrt{2\pi \sigma}}e^{-\frac{Y_2^2}{2\sigma^2}} \frac{1}{\sqrt{2\pi \sigma}}e^{-\frac{Y_3^2}{2\sigma^2}}... 
		\end{array} \label{eq:p2}
	\end{equation}
	We can infer that \textbf{$Y$ is independent of each other} as Eq~\ref{eq:p2} proves that the probability density function of $Y$ can be written as the product of the density functions of its variables. Then, for $S^2$, we have:
	\begin{equation}
		\begin{array}{rl}
			(n-1)S^2&=\sum_{i=1}^{n}(X_i-\bar{X})^2 \\
			&=\sum_{i=1}^{n}(X_i^2-2X_i\bar{X}+\bar{X}^2) \\
			&=\sum_{i=1}^{n}X_i^2+\sum_{i=1}^{n}\bar{X}(\bar{X}-2X_i) \\
			&=\sum_{i=1}^{n}X_i^2-n\bar{X}^2 \\
			&=\sum_{i=1}^{n}Y_i^2-Y_1^2 \\
			&=\sum_{i=2}^{n}Y_i^2
		\end{array} \label{eq:p3}
	\end{equation}
	Therefore, $\bar{X}$ is only affected by $Y_1$, while $S^2$ is affected by $Y_2$ to $Y_n$. As $[Y_1, Y_2, ..., Y_n]$ are independent of each other, we can conclude that the mean and variance are independent of each other. 
\end{proof}
Through the above theorem and proof, we can know that the mean and variance vectors are independent of each other in the optimization process of VAE. Therefore, we expect the two parts that do not affect each other to represent the features after disentanglement.

\section{More experiments} \label{app:b}
\textbf{Experiments on other language models}. In addition to BERT and RoBERTa, we also add the performance comparison of TACIT under the condition of DeBERTa~\cite{HeLGC21}\footnote{huggingface.co/microsoft/deberta-base} and OPT-1.3b~\cite{OPT}\footnote{https://huggingface.co/facebook/opt-1.3b} as the basic language model. We adopt the same experimental settings and obtain experimental results as shown in Table~\ref{tab:a1}. The experimental results show that TACIT can bring performance gain to DeBERTa and OPT-1.3b (+1.44\% and +3.06\%). The reason why the basic performance of OPT-1.3b is only 82.2\% is that its large number of parameters brings the risk of overfitting. 
\begin{table}[htbp]
	\renewcommand{\arraystretch}{0.9}
	\setlength\tabcolsep{1.3mm}
	\centering
	\begin{tabular}{ccc|cc|c} \hline \hline
		& \multicolumn{2}{c}{OPT-1.3b} & \multicolumn{2}{c}{DeBERTa} & \multicolumn{1}{c}{RoBERTa} \\ \hline
		& \multicolumn{1}{l}{Vanilla} & \multicolumn{1}{l}{TACIT} & Vanilla & TACIT & \multicolumn{1}{c}{TACIT} \\ \hline
		B→D   & 89.55 & 89.50 & 91.95 & 92.45 & 92.65 \\
		B→E   & 90.60  & 92.10 & 93.40 & 94.05 & 93.81 \\
		B→K   & 93.90 & 94.05 & 93.75 & 95.25 & 95.03 \\
		D→B   & 57.45 & 64.95 & 93.40 & 94.10 & 93.57 \\
		D→E   & 56.05 & 66.35 & 90.75 & 93.75 & 93.16 \\
		D→K   & 66.25 & 72.50 & 93.55 & 94.70 & 94.40 \\
		E→B   & 86.05 & 89.30 & 90.65 & 92.95 & 92.70 \\
		E→D   & 84.05 & 87.15 & 89.65 & 92.36 & 92.06 \\
		E→K   & 93.50 & 94.55 & 94.95 & 95.55 & 95.87 \\
		K→B   & 89.90 & 89.95 & 91.05 & 93.15 & 93.06 \\
		K→D   & 86.05 & 89.15 & 90.55 & 92.10 & 91.97 \\
		K→E   & 93.10 & 93.55 & 94.30 & 94.90 & 94.57 \\ \hline
		Avg.  & 82.20 & \textbf{85.26} & 92.33 & \textbf{93.77} & 93.57 \\
		\hline \hline
	\end{tabular}%
	\caption{Results based on OPT-1.3b and DeBERTa. The boldface indicates the optimal
		results.}
	\label{tab:a1}%
\end{table}%

\textbf{Spam detection and sentiment multi-classification}. We evaluate the reliability of the proposed approach in more areas (spam detection) as well as in more classes (sentiment multi-classification). For spam detection, we use three publicly available datasets, Ling\footnote{www.kaggle.com/datasets/mandygu/lingspam-dataset}, SMS\footnote{www.kaggle.com/datasets/uciml/sms-spam-collection-dataset}, and Emails\footnote{www.kaggle.com/datasets/jackksoncsie/spam-email-dataset}. To prevent sample imbalance, we sample the datas of different classes 1:1. The number of sampling is determined by the category that contains fewer samples. For sentiment multi-classification, we use sst5\footnote{github.com/doslim/Sentiment-Analysis-SST5} and twitter\footnote{www.kaggle.com/datasets/saurabhshahane/twitter-sentiment-dataset}. For sst5, samples with a score of 1 in are regarded as negative, those with a score of 3 are regarded as neutral, and those with a score of 5 are regarded as positive. We then sample sst5 and twitter, with 1000 samples for each category. Finally, for each dataset, we adopt 5-fold cross-validation, and the partition ratio of training set and validation set is 8:2. The experimental results are shown in Table~\ref{tab:a2}. The experimental results confirm the superiority of TACIT in various tasks. 

\begin{table}[htbp]
	\renewcommand{\arraystretch}{0.9}
	\setlength\tabcolsep{0.1mm}
	\centering
	\begin{tabular}{lccc|ccc} \hline \hline
		& \multicolumn{3}{c}{BERT} & \multicolumn{3}{c}{RoBERTa} \\ \hline
		& \multicolumn{1}{l}{Vanilla} & UDAML  & \multicolumn{1}{l}{TACIT} & \multicolumn{1}{l}{Vanilla} & UDAML  & TACIT \\ \hline
		Ling→SMS & 68.81 & 81.77 & 85.81 & \multicolumn{1}{r}{90.69} & 90.87 & 92.12 \\
		Ling→Email & 76.57 & 78.26 & 79.09 & \multicolumn{1}{r}{81.65} & 81.94 & 83.30 \\
		SMS→Ling & 50.31 & 53.80 & 53.62 & \multicolumn{1}{r}{50.73} & 54.26 & 53.73 \\
		SMS→Email & 51.02 & 54.43 & 53.06 & \multicolumn{1}{r}{50.91} & 54.58 & 53.17 \\
		Email→SMS & 77.71 & 79.39 & 81.25 & \multicolumn{1}{r}{80.72} & 81.36 & 83.94 \\
		Email→Ling & 94.59 & 94.88 & 96.04 & \multicolumn{1}{r}{94.04} & 94.35 & 96.60 \\ \hline
		Avg.  & 69.84 & 73.76 & \textbf{74.81} & 74.79 & 76.23 & \textbf{77.14} \\ \hline \hline
		sst5→twitter & 49.53 & 52.69 & 57.63 & \multicolumn{1}{r}{49.33} & 51.36 & 52.73 \\
		twitter→sst5 & 57.83 & 60.10  & 62.43 & \multicolumn{1}{r}{56.17} & 56.92 & 58.10 \\ \hline
		Avg.  & 53.68 & 56.40  & \textbf{60.03} & 52.75 & 54.14 & \textbf{55.42} \\ \hline \hline
	\end{tabular}%
	\caption{Spam and multi-class classification results. }
	\label{tab:a2}%
\end{table}%

\end{document}